\begin{document}

\title{\Large TNE: A Latent Model for Representation Learning on Networks}
\author{Abdulkadir \c{C}elikkanat\thanks{CentraleSup\'elec, University of Paris-Saclay and Inria Saclay, France Email: $\{$\texttt{firstname.lastname}$\}$\texttt{@centralesupelec.fr}} \\
\and
Fragkiskos D. Malliaros\footnotemark[1]}
\date{}

\maketitle







\begin{abstract} \small\baselineskip=9pt 
Network representation learning (NRL) methods aim to map each vertex into a low dimensional space by preserving the local and global structure of a given network, and in recent years they have received a significant attention thanks to their success in several challenging problems. Although various approaches have been proposed to compute node embeddings, many successful methods benefit from random walks in order to transform a given network into a collection of sequences of nodes and then they target to learn the representation of nodes by predicting the context of each vertex within the sequence. In this paper, we introduce a general framework to enhance the embeddings of nodes acquired by means of the random walk-based approaches. Similar to the notion of \textit{topical word embeddings} in NLP, the proposed method assigns each vertex to a topic with the favor of various statistical models and community detection methods, and then generates the enhanced community representations. We evaluate our method on two downstream tasks: node classification and link prediction. The experimental results demonstrate that the incorporation of vertex and topic embeddings outperform widely-known baseline NRL methods.
\end{abstract}

\section{Introduction}
Graphs are important mathematical structures commonly used to represent the objects and their relations in real-world systems such as the World Wide Web, social networks, and protein-protein interactions. Due to the wide range of applications that networks appear, network analysis methods have attracted great interest from the research community, and numerous techniques have been proposed to better understand and uncover their underlying properties.  In recent years, many prominent and powerful approaches have emerged under the field of \textit{network representation learning} (NRL). The main goal of NRL techniques is to learn feature vectors corresponding to the nodes of the graph (also known as \textit{node embeddings}), by preserving important structural properties of the network; those vectors can later be used to perform various analysis and mining tasks including visualization, node classification and link prediction with the favor of machine learning algorithms.

The initial studies in the field of node representation learning, have mostly relied on \textit{matrix factorization} techniques, since various properties and  interactions between nodes can be expressed as matrix operations. However, these methods are mainly applicable on small-scale networks due to their high computational cost -- especially for graphs consisting of millions of nodes and edges \cite{DBLP:journals/debu/HamiltonYL17}. More recent studies have concentrated on developing methods suitable for relatively large-scale networks --  being able to effectively approximate the underlying objective functions that capture meaningful information about the nodes of the graph and their properties.

A plethora of node representation learning methods have been inspired by the advancements in the area of \textit{natural language processing} (NLP), borrowing various ideas originally developed for computing \textit{word embeddings}. One such successful technique is the \textit{Skip-Gram} architecture \cite{word2vec}, which aims to find latent representations of words by estimating their context in the sentences of a textual corpus. That way, many pioneer studies in NRL utilize the idea of random walks to transform graphs into a collection of sentences -- as an analogy to the area of natural language -- and these sentences or walks are later being used to learn node embeddings. 
 
Although random walk-based approaches are strong enough to capture  local connectivity patterns, they mainly suffer to sufficiently convey information about the global structural properties of the network. More precisely, real-world networks have an inherent clustering (or community) structure, which can be utilized to further improve the predictive capabilities of node embeddings. One can interpret such structural information based on an analogy to the concept of \textit{topics} in a collection of documents. In a similar way as word embeddings can be enhanced with topic-based information \cite{topical_word}, here we aim at empowering node embeddings by employing information about the community structure of the graph --  that can be achieved by a process similar to the one of \textit{topic modeling}.

In this paper, we propose \textit{topical node embeddings} (TNE), a framework in which node and topic embeddings are learned separately from the network, and then they are merged into a single vector -- leading to further improvements in the performance on downstream tasks.  The main contributions of the paper can be summarized as follows:

\begin{itemize}
	\item \textit{A novel node representation learning framework}. We propose a new strategy, called TNE, which learns community embeddings from the graph, and use them to improve the node representations extracted by random walk-based methods.
	
	
	\item \textit{Enriched feature vectors}. We perform a detailed empirical evaluation of the embeddings learned by TNE on the tasks of node classification and link prediction. As the experimental results demonstrate, the proposed model provides  feature vectors which can boost the performance of downstream tasks.

\end{itemize}

The rest of the paper is organized as follows. In Section \ref{sec: related_work}, we describe the related work. In Section \ref{sec: prob_def}, we formulate the problem, and in  Section \ref{sec:model} we present the proposed method. Section \ref{sec:experiments} presents the experimental results, and finally, in Section \ref{sec:conclusions} we conclude our work providing also future research directions.

\section{Related Work}\label{sec: related_work}
In recent years, many methods have been proposed to learn a latent representation of nodes in an unsupervised manner. Developing a technique for learning network representations inherently contains a plethora of challenges, since a good representation should capture various underlying properties of the network. For instance, many real-world networks consist of tightly connected communities and obey a scale-free property with respect to their degree distribution; in other words, a small numbers of nodes, known as \textit{hubs}, are connected to the majority of nodes. Hence, a structure-preserving method should be able to produce latent representations in which nodes that link to a hub should be  close enough to it in the embeddings' space, while they should also placed far away from each other if they  belong to totally different communities \cite{DBLP:journals/debu/HamiltonYL17}.

The traditional unsupervised feature learning methods aim at factorizing some matrix representation, which has been designed by taking into account  the properties and connections of a given network. MDS \cite{mds}, Laplacian Eigenmaps \cite{laplacian_eigenmap}, Locally Linear Embeddings (LLE) \cite{locally_linear_embedding} and IsoMap \cite{isomap} are just some of those approaches targeting to preserve the first-order proximity of nodes. More recently,  proposed algorithms including GraRep \cite{grarep} and HOPE \cite{hope}, aim at preserving higher order proximities of nodes. Nevertheless, despite the fact that matrix factorization approaches offer an elegant way to capture the desired properties, they mainly suffer from their time complexity.

In recent years,  random walk-based methods \cite{DBLP:journals/debu/HamiltonYL17} have gained considerable attention, mainly due their efficiency. In fact, a very recent study \cite{implicit_factorization} shows that DeepWalk and node2vec \cite{deepwalk, node2vec} implicitly perform matrix factorizations. Following this line of research, distinct random sampling strategies have been proposed and various methods have emerged \cite{random_walk_ddrw, random_walk_struc2vec}.

To the best of our knowledge, very few studies are benefiting from the community structure property of real network to learn node embeddings. The authors of \cite{DBLP:conf/aaai/WangCWP0Y17},  have proposed a matrix factorization-based algorithm that incorporates the community structure into the embedding process, implicitly focusing on the quantity of modularity. The ComE model \cite{come}, proposes a closed-loop procedure among the encoding of communities, learning node embeddings and community detection in the network. As we will present shortly, our work aims at independently learning  node and community (topic) embeddings, and then combining them into expressive topical feature vectors.

\begin{figure}
\includegraphics[scale=0.6]{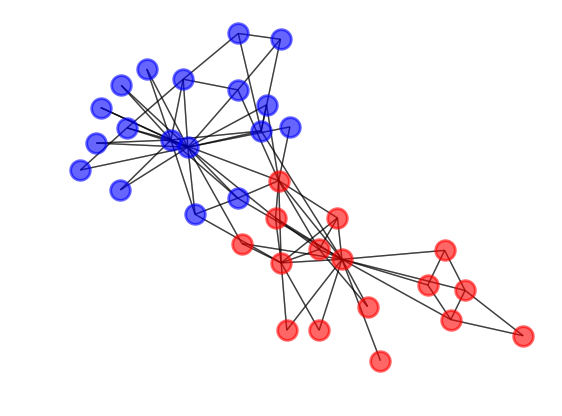}
\caption{The topic assignments in Zachary's karate club network. Each node $v$ is assigned to a community $k$ maximizing $\mathbb{P}(v|k)$ by the \textit{tne-Lda} model. \label{fig:karate}}
\end{figure}

\section{Problem Formulation and Latent Models on Graphs}\label{sec: prob_def}

Let $G=(\mathcal{V}, \mathcal{E})$ be a graph, where $\mathcal{V}$ is the set of nodes and $\mathcal{E}$ denotes the set of edges. Our goal is to find a mapping function $\Phi:\mathcal{V} \rightarrow \mathbb{R}^d$, where $\Phi(v)$ indicates the representation of the vertex $v$ in a lower dimensional space $\mathbb{R}^d$ (which we desire to learn for feeding downstream learning tasks) and $d$ is generally referred to as the embedding or dimension size which is much smaller than the cardinality of the vertex set, $|\mathcal{V}|$.

Node embedding methods based on the popular SkipGram  architecture \cite{word2vec} mainly target to maximize the  log-probability $\max_{\Phi, \widetilde{\Phi}} \sum_{v} \sum_{u \in N_{\gamma}(v)} \log \mathbb{P}(\Phi(u)|\widetilde{\Phi}(v))$,
 where $N_{\gamma}(v)$ denotes the set of reachable nodes by starting from the vertex $v \in \mathcal{V}$ in at most $\gamma$ steps. However, we have to deal with a computational problem when we aim to find $\Phi(u)$ and $\widetilde{\Phi}(v)$ for each $u,v\in\mathcal{V}$, mainly because the computational cost grows significantly as the length $\gamma$ increases due to the sum over $N_{\gamma}(v)$. Therefore, many approaches prefer to approximate the objective function above using random walks as follows:

\begin{equation}\label{eq:random_walk_obj_func}
\mathcal{L}(\Phi, \widetilde{\Phi}) := \max_{\Phi, \widetilde{\Phi}} \sum_{\boldmath{w} \in \mathcal{W}}\sum_{v_i \in \boldmath{w}} \sum_{-\gamma \leq j \leq \gamma} \log \mathbb{P}(\Phi(v_{i+j})|\widetilde{\Phi}(v_i))
\end{equation}

\noindent where $\boldmath{w}=(v_1,\ldots,v_i,\ldots,v_L)$ is a walk of length $L$, $\gamma$ refers to the window size, and $\mathcal{W}$ is a collection of walks. Note that we obtain two different embedding vectors $\Phi(v)$ and $\widetilde{\Phi}(v)$ for each node $v \in \mathcal{V}$, but we will only consider the vector $\Phi(v)$ as a node embedding of $v \in \mathcal{V}$.

Complex networks, such as social or biological networks, consist of latent clusters of different sizes in which the nodes are more likely to be connected to each other \cite{newman_community_structure}. Although some random walk-based methods implicitly benefit from this structural property of networks, our main goal here is to enhance  node embedding vectors using clusters of a given network. We mainly rely on two different approaches to extract latent communities: on random walks and on the network structure itself. For a given graph $G$, we will use the symbol $\mathcal{K}$ to indicate the set of communities of $G$.

\subsection{Random walk-based graph topic models}
Most real-world networks can be expressed as a combination of nested or overlapping communities \cite{overlap_networks}. Therefore, when a random walk is initialized, it does not only visit neighboring nodes, but also traverses communities in the network (see Fig. \ref{fig:complete}). In this regard, we assume that each random walk can be represented as random mixtures over latent communities, and each community can be characterized by a distribution over nodes. In other words, we can write the following generative model for each walk over the network:

\begin{enumerate}
	\item For each $k \in \{1,...,K\}$
	\begin{itemize}
		\item $\phi_k \sim Dir(\beta)$
		\end{itemize}
	\item For each walk $\boldmath{w}=(v_1,...,v_i,...,v_L)$
	\begin{itemize}
		\item $\theta_{w} \sim Dir(\alpha)$
		\item For each vertex $v_i \in \boldsymbol{w}$
		\begin{itemize}
			\item $z_{i} \sim Multinomial(\theta_{w})$
			\item $v_i \sim Multinomial(\phi_{z_{i}})$
		\end{itemize}
	\end{itemize}
\end{enumerate}

\noindent Here, $N$ is the number of walks, $L$ is the length of walks and $K$ is the number of clusters.  



If we consider each random walk as a document and the collection of random walks as a corpus, it can be seen that the statistical process defined above corresponds to the well known Latent Dirichlet Allocation (LDA) model \cite{lda}. Therefore, each community corresponds to a distinct topic in the terminology of NLP (we use the terms \textit{topic} and \textit{community} interchangeably in the rest of the paper).

Now we can use community or topic assignments $z$ of nodes in the walks $w\in \mathcal{W}$ to obtain better vector representations. By replacing a node with its topic label, we aim to predict the nodes in the context of the topic. More formally, we can state our objective function to find community or topic representations as follows:

\begin{align}\label{eq:random_walk_topic_obj_func}
\mathcal{L}(\Psi, &\widetilde{\Psi}) :=\nonumber \\
&\max_{\Psi, \widetilde{\Psi}} \sum_{\boldmath{w} \in \mathcal{W}}\sum_{v_i \in \boldmath{w}} \sum_{-\gamma \leq j \leq \gamma} \log \mathbb{P}(\Psi(v_{i+j})|\widetilde{\Psi}(t_w(v_i))).
\end{align}

\noindent By maximizing the log-probability above, we obtain the embedding vectors $\widetilde{\Psi}(k) \in \mathbb{R}^d$ for each topic label $k \in \mathcal{K}$, which are called as \textit{topic embeddings} or representations. We will refer to this model as \textit{Lda} throughout the paper.

In the previous \textit{Lda} model, the latent community assignment of each node is independently chosen from the topic label of the previous node in the walk. However, the hidden state of the current node can play an important role towards determining the next vertex to visit, as the random walk also traverses through communities. Therefore, we can modify the \textit{Lda} model, and define the following generative process:

\begin{enumerate}
	\item For each $k \in \{1,...,K\}$
	\begin{itemize}
		\item $b_k \sim Dir(b_0)$
		\item $a_{k} \sim Dir(a_0)$
	\end{itemize}
	\item $\pi \sim Dir(p_0)$
	\item For each walk $\boldmath{w}=(v_1,...,v_i,...,v_L)$
	\begin{itemize}
	    \item $z_{1} \sim Dir(\pi)$
		\item For each vertex $v_i \in \boldsymbol{w}$, for all $i<L$
		\begin{itemize}
		    \item $v_{i} \sim Multinomial(b_{z_{i}})$
			\item $z_{i+1} \sim Multinomial(a_{z_{i}})$
		\end{itemize}
		\item $v_{L} \sim Multinomial(b_{z_{L}})$
	\end{itemize}
\end{enumerate}

\noindent The above model is in fact the well-known \textit{Hidden Markov Model} with symmetric Dirichlet priors over transition and emission distributions (we will refer to this model as \textit{Hmm}). Note that, in the generation of each node sequence, the same transition probabilities are used, unlike the topic distribution of the \textit{Lda} model, and the vectors $a_k$ and $b_k$ contain $\mathcal{K}$ and $|\mathcal{V}|$ components, respectively. Moreover, as shown in Lemma \ref{lemma: lda_markov_model_connection}, the \textit{Lda} model can also be viewed as a special case of \textit{Hmm} for the generation of a specific node sequence, after choosing suitable distributions.

\begin{lemma}\label{lemma: lda_markov_model_connection}
The probability of generating the topic and node sequences $\boldsymbol{z}=(z_1,...,z_L)$, $\boldsymbol{w}=(v_1,...,v_L)$ by \textit{Lda} for a given node $\phi_k$ and topic distributions $\theta_{\boldsymbol{w}}$, is equal to the probability of producing the sequences by \textit{Hmm} if the initial, transition and emission probabilities are chosen as $\boldsymbol{\pi} := \theta_{\boldsymbol{w}}$, $\boldsymbol{a_{(\cdot, k)}} := \theta_{\boldsymbol{w},k}$ and  $\boldsymbol{b_{k}} = \phi_{k}$.
\end{lemma}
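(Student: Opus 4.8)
The plan is to write both joint probabilities in closed form and match them factor by factor. First I would expand the probability that the \textit{Lda} process emits the pair $(\boldsymbol{z},\boldsymbol{w})$ given the fixed parameters $\theta_{\boldsymbol{w}}$ and $(\phi_k)_k$. Because each $z_i$ is an independent draw from $\mathrm{Multinomial}(\theta_{\boldsymbol{w}})$ and each $v_i$ is conditionally independent of everything else given $z_i$, this probability factorizes over the $L$ positions as $\prod_{i=1}^{L}\theta_{\boldsymbol{w},z_i}\,\phi_{z_i,v_i}$, where $\theta_{\boldsymbol{w},k}$ and $\phi_{k,v}$ denote the relevant coordinates of the two distributions.

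Next I would do the same for the \textit{Hmm} with a generic initial vector $\boldsymbol{\pi}$, transition rows $\boldsymbol{a}_k$ and emission vectors $\boldsymbol{b}_k$. Unrolling the chain in the exact order of the generative process --- sample $z_1$, then alternately emit $v_i$ and sample $z_{i+1}$ for $i<L$, finally emit $v_L$ --- yields
\[
\mathbb{P}_{\textit{Hmm}}(\boldsymbol{z},\boldsymbol{w}) \;=\; \pi_{z_1}\,\Bigl(\textstyle\prod_{i=1}^{L} b_{z_i,v_i}\Bigr)\Bigl(\textstyle\prod_{i=1}^{L-1} a_{z_i,z_{i+1}}\Bigr).
\]

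Then I would substitute the prescribed choices. With $\boldsymbol{b}_k=\phi_k$ the emission block is already $\prod_{i=1}^{L}\phi_{z_i,v_i}$. The key observation is that fixing the $k$-th column of the transition matrix to the constant $\theta_{\boldsymbol{w},k}$ forces every row of that matrix to equal $\theta_{\boldsymbol{w}}$, so a transition out of any current topic is just a fresh $\mathrm{Multinomial}(\theta_{\boldsymbol{w}})$ draw; hence $a_{z_i,z_{i+1}}=\theta_{\boldsymbol{w},z_{i+1}}$. Combined with $\pi_{z_1}=\theta_{\boldsymbol{w},z_1}$ and a reindexing $i+1\mapsto i$ in the transition product, this collapses $\pi_{z_1}\prod_{i=1}^{L-1}a_{z_i,z_{i+1}}$ to $\prod_{i=1}^{L}\theta_{\boldsymbol{w},z_i}$, and the two expressions coincide.

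The computation is elementary, so there is no real obstacle; the only point deserving a line of justification is that the substituted parameters still constitute a valid \textit{Hmm}, i.e.\ that the transition matrix is row-stochastic --- which holds because $\theta_{\boldsymbol{w}}$ is a probability vector over the $K$ topics, so each of its (identical) rows sums to one. Conceptually, the lemma simply records that a state-independent transition kernel degenerates the hidden Markov chain into the i.i.d.\ topic-assignment mechanism of \textit{Lda}.
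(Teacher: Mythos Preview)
Your proposal is correct and matches the paper's own argument essentially line for line: the paper writes the \textit{Hmm} joint as $\pi_{z_1}\bigl(\prod_{l=1}^{L-1}b_{z_l,v_l}a_{z_l,z_{l+1}}\bigr)b_{z_L,v_L}$, the \textit{Lda} joint as $\prod_{l=1}^{L}\phi_{z_l,v_l}\theta_{\boldsymbol{w},z_l}$, substitutes $\boldsymbol{\pi}=\theta_{\boldsymbol{w}}$, $\boldsymbol{a}_{(\cdot,k)}=\theta_{\boldsymbol{w},k}$, $\boldsymbol{b}_k=\phi_k$, and reindexes exactly as you do. Your extra remark that the substituted transition matrix is row-stochastic is a small addition not present in the paper, but otherwise the approaches coincide.
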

\begin{proof}
Please see the Appendix.
\end{proof}

\subsection{Network structure-based modeling}
In the previous models, the generated random walks are used to detect the community (or topic) assignment of each node in the given node sequence. Here, we propose two additional model, namely \textit{BigC} and \textit{Louvain}, which directly target to determine communities of nodes from a given network.  The \textit{Louvain} model uses the Louvain method \cite{louvain} to extract communities, while the \textit{BigC} model is based on an overlapping community detection method called BigClam \cite{bigclam}.

\section{Topical Node Embeddings}\label{sec:model}

\begin{figure*}
\centering
	\includegraphics[scale=0.75]{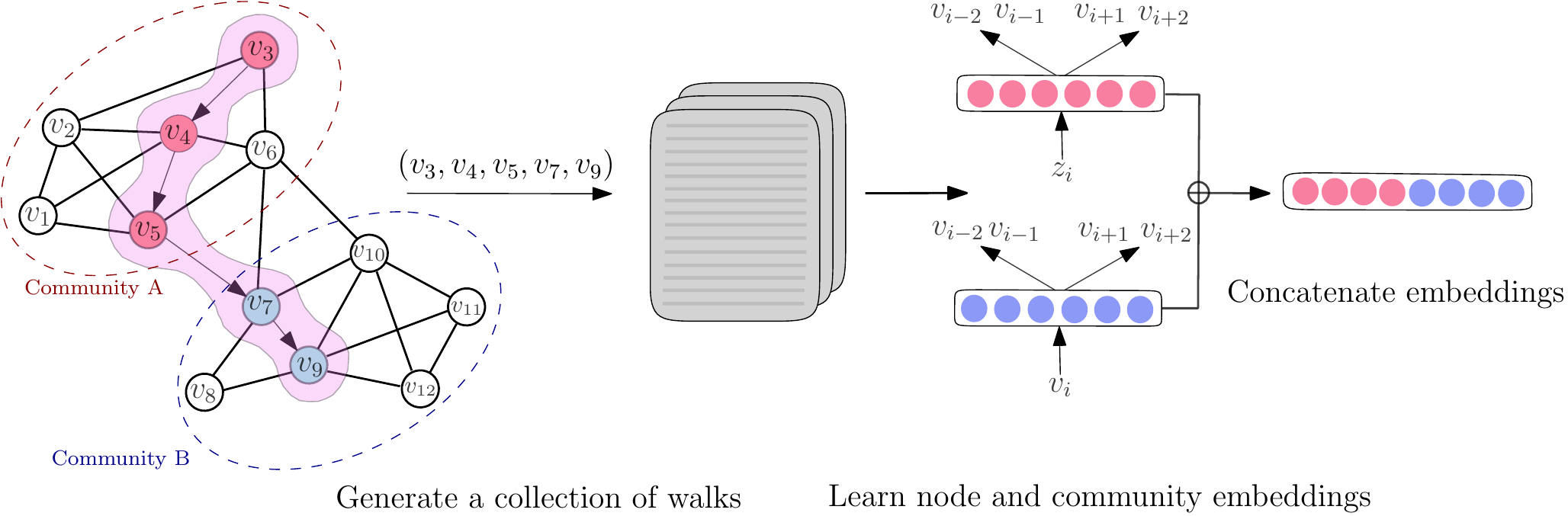}
	\caption{Schematic representation of the TNE model. \label{fig:complete}}
\end{figure*}

In this section, we will describe the proposed Topical Node Embeddings (TNE) model in detail. An overview of the model is given in Fig. \ref{fig:complete}. Our overall goal is to enhance node embedding using information about the underlying topics of the graph.  This can be achieved by learning node and topic embedding vectors independently of each other, jointly maximizing the objectives defined in Equations \eqref{eq:random_walk_obj_func} and \eqref{eq:random_walk_topic_obj_func}. By combining these objective functions, we derive the following equation:

\begin{align*}\label{eq:tne_obj_func}
\max_{\Phi, \widetilde{\Phi}, \Psi, \widetilde{\Psi}} \sum_{\boldmath{w} \in \mathcal{W}}\sum_{v_j \in \boldmath{w}}&\sum_{-\gamma \leq j \leq \gamma}( \log \mathbb{P}(\Phi(v_{i+j})|\widetilde{\Phi}(v_i)) \\
&+ \log \mathbb{P}(\Psi(v_{i+j})|\widetilde{\Psi}(t_w(v_i))) ).
\end{align*}

\noindent In the Skip-Gram model \cite{word2vec}, the probability measure $\mathbb{P}(\cdot|\cdot)$ in the above equation is considered as a softmax function

\begin{align*}
\mathbb{P}(\Phi(u)|\tilde{\Phi}(v)) := \frac{\exp(\Phi(u) \cdot \tilde{\Phi}(v))}{\sum_{u\in V}\exp(\Phi(u) \cdot \tilde{\Phi}(v))},
\end{align*}

\noindent and we adopt the negative sampling technique \cite{word2vec} in order to make our computations more efficient.

After obtaining the node and topic representations, our final step is to efficiently incorporate these feature vectors. For this purpose, we introduce three simple strategies, namely $Max(\cdot)$, $WMean(\cdot)$, and $Min(\cdot)$:

\begin{itemize}
\item $Max(v)$. It produces the final representation for the node $v$ by combining the node and community embeddings: $Max(v) := \Phi(v) \oplus \widetilde{\Psi}(k^*)$. Here, the topic label $k^{*}$ is equal to the parameter $k$ maximizing the expression $\mathbb{P}(v|\widetilde{k}$ and the symbol $\oplus$ denotes the concatenation operation. For instance, if we select the number of topics as $2$ for Zachary's karate club in Figure \ref{fig:karate}, then each node $v$ is assigned to the topic $k$ that has the highest probability.

\item $Min(v)$. The second strategy can be defined as  $Min(v) := \Phi(v) \oplus \widetilde{\Psi}(k^*)$, where $k^{*}=\text{arg}\min_{k}\mathbb{P}(v|k)$.

\item $WMean(v)$. The final strategy is formulated as follows: $WMean(v) :=\Phi(v) \oplus \sum_k \widetilde{\Psi}(k) \cdot \mathbb{P}(v |k)$.
\end{itemize}

\noindent We call the final vector obtained after concatenating the node and topic feature vectors as \textit{topical node embedding}. Algorithm 1 provides the pseudocode of the TNE model. 

The general structure of our framework follows. First, we need a collection of walks over the network to learn node and topic embeddings -- so, any approach such as Deepwalk and Node2vec can be used to perform random walks. Then, we choose a strategy for this collection to get the topic assignment $t_w(v)$ of each node $v \in \mathcal{V}$ in the walk $w \in \mathcal{W}$, based on the latent models on graphs defined in Section \ref{sec: prob_def}. In the first case, we use the stochastic processes \textit{Lda} and \textit{Hmm} described in Section \ref{sec: prob_def}, getting the topical node embedding models of \textit{tne-lda} and \textit{tne-hmm}, respectively. In the second case, the topic assignments are inferred from the network structure based on the  \textit{BigC} and \textit{Louvain} models -- relying on the BigClam and Louvain methods respectively -- and the corresponding topical node embedding models are called \textit{tne-BigC} and \textit{tne-Louvain}.

\vspace{.5cm}
\hspace{-.6cm}
 \begin{tikzpicture}
 \node[draw, rounded corners] {%
\begin{varwidth}{\linewidth}
\begin{algorithmic}
\STATE{\textbf{Algorithm 1:} Topical Node Embeddings}
\REQUIRE Graph $G=(\mathcal{V},\mathcal{E})$, number of walks: $n$, walk length: $\mathcal{L}$, window size: $\gamma$, number of communities: $K$, embedding size: $d$
\ENSURE $|V| \times 2d$ embedding matrix $\Omega$
\STATE{$\mathcal{W}:=\{\boldmath{w}_1,...,\boldmath{w}_N\}$ $\gets$ $\text{GenerateWalks}(G, N, \mathcal{L}, w)$}
\STATE{$t(v) \gets$  $\text{DetectTopics}(\mathcal{W})$}
\STATE{$\mathcal{P}_{\mathcal{W}}$  $\gets$ $\text{GenerateNodeContextPairs}(\mathcal{W}, \gamma)$}
\STATE{$\Phi$ $\gets$ $\text{SkipGram}(\mathcal{P}_{\mathcal{W}}, \gamma, d)$}
\STATE{$\mathcal{P}_{\mathcal{T}}$  $\gets$ $\text{UpdateNodeContextPairs}(\mathcal{P}_{\mathcal{W}}, t)$}
\STATE{$\widetilde{\Psi}$ $\gets$ $\text{SkipGram}(\mathcal{P}_{\mathcal{T}}, \gamma, f)$}
\STATE{$\Omega \gets \text{CombineEmbeddings}(\Phi, \widetilde{\Psi})$ }
\end{algorithmic}
     \end{varwidth}
            };
\end{tikzpicture}

\vspace{.5cm}

\hspace{-.7cm}
\begin{tikzpicture}
 \node[draw, rounded corners] {%
\begin{varwidth}{.98\linewidth}
\begin{algorithmic}
\STATE{\textbf{Algorithm 2:} $\text{GenerateNodeContextPairs}(\mathcal{W}, \gamma)$}
\REQUIRE{A collection of walks $\mathcal{W}:=\{w_1,...,w_N\}$, and window size $\gamma$}
\ENSURE{Node-context pairs $\mathcal{P}$}
\STATE{$\mathcal{P} \gets \{\}$}
\FOR{each walk $w \in \mathcal{W}$}
	\STATE{$\boldmath{p} \gets ()$}
	\FOR{each node $v_i \in \boldmath{w}:=\{v_1,...,v_L\}$}
		\FOR{each $j \in \{\max \{0, i-\gamma\},...,\min\{i+\gamma,n\} \}$}
			\STATE{$\boldmath{p} \gets \text{Append}(\boldmath{p}, (v_i, v_j))$}		
		\ENDFOR
	\ENDFOR
	$\mathcal{P} \gets \mathcal{P} \cup \{\boldmath{p}\}$
\ENDFOR
\end{algorithmic}
     \end{varwidth}
            };
\end{tikzpicture}
\vspace{.2cm}

Afterwards, we produce the node-context pairs to provide the input for the Skip-Gram algorithm, and we learn the latent node representations. By replacing each node $v$ with its topic assignment $t_w(v)$ in the walk $w \in \mathcal{W}$, we obtain a new set of pairs to learn topic embeddings. Finally, we combine the feature vectors depending on our methodology.

\vspace{.5cm}
\hspace{-.75cm}
\begin{tikzpicture}
 \node[draw, rounded corners] {%
\begin{varwidth}{.97\linewidth}
\begin{algorithmic}
\STATE{\textbf{Algorithm 3:} $\text{UpdateNodeContextPairs}(\mathcal{P}_\mathcal{W}, t)$}
\REQUIRE{A collection of node-context sequence pairs \\ $\mathcal{P}_\mathcal{W}:=\{\boldmath{p}_{w_1},...,\boldmath{p}_{w_N}\}$, and topic assignments $t$}
\ENSURE{Node-topic pairs $\mathcal{P}_{\mathcal{T}}$}
\STATE{$\mathcal{P}_{\mathcal{T}} \gets \{\}$}
\FOR{each sequence $\boldmath{p} \in \mathcal{P}_\mathcal{W}$}
\STATE{$\boldmath{p}_{\mathcal{T}} \gets ()$}
\FOR{each pair $(u,v) \in \boldmath{p}$}
	\STATE{$\boldmath{p}_{\mathcal{T}} \gets \text{Append}(\boldmath{p}_{\mathcal{T}}, (u, t(v)))$}
\ENDFOR
\STATE{$\mathcal{P}_{\mathcal{T}} \gets \mathcal{P}_{\mathcal{T}} \cup \{\boldmath{p}_{\mathcal{T}}\}$}
\ENDFOR
\end{algorithmic}
     \end{varwidth}
            };
\end{tikzpicture}

\section{Experiments} \label{sec:experiments}
In this section, we will present the datasets that we use in our experiments and further discuss the performance and effectiveness of the proposed four variations of TNE model  in the tasks of node classification and link prediction. Our model has been implemented in Python and the source code can be found at: \url{https://abdcelikkanat.github.io/projects/TNE/}.

\subsection{Baseline Methods}
We will consider two notable random walk-based approaches and apply our framework to the collection of walks generated by these algorithms.

\begin{itemize}
	\item Deepwalk \cite{deepwalk} uses a very natural sampling strategy in producing walks. At each step, it uniformly chooses a node having connections to the one that it currently resides at, and repeats the same procedure until obtaining a walk of the desired length.  We will refer to this method as \textit{deepwalk-emb}.
	
	\item Node2vec \cite{node2vec} is an extension of Deepwalk, and its walking behavior is controlled by two parameters $p$ and $q$ which provide the ability to discover  distant regions of the network; it also captures  structural similarities between nodes. We will refer to this method as \textit{node2vec-emb}.
\end{itemize} 

\subsection{Parameter Settings}\label{subsec: paramsetting}
In this section, we  describe the parameters' settings that we have used for our experiments and clarify the strategies that we follow. Since both of the random walk sampling strategies that we examine here (\textit{Deepwalk} and \textit{Node2vec}) share many common parameters, we assign all of them to the same typical values. 

More specifically, we consider the number of walks $n=80$, walk length $l=10$, window size $\gamma=10$, and the embedding dimension $d=128$. The return and in-out hyper-parameters $p$, $q$ of Node2vec are simply set to $4.0$ and $1.0$ for all experiments -- so, the walk is encouraged to explore previously unvisited regions of the network. To speed up the training process, we  use negative sampling \cite{word2vec} for all models. We also use stochastic gradient descent (SGD)  \cite{sgd} for optimization, setting the initial learning rate to $0.025$.

For learning the topic assignment of each node in node sequences,  we perform collapsed Gibbs sampling \cite{collapsed_gibbs_sampling} for \textit{tne-Lda} model, and variational message passing \cite{var_msg_pass} for \textit{tne-Hmm}. For all variants of the TNE framework, the number of topics are selected as $K=80$ in the experiments, and $Max$ concatenation method is preferred to obtain final embedding vector.

\subsection{Multi-Label Node Classification}

\begin{figure*}[t]
\centering
	\includegraphics[scale=0.7]{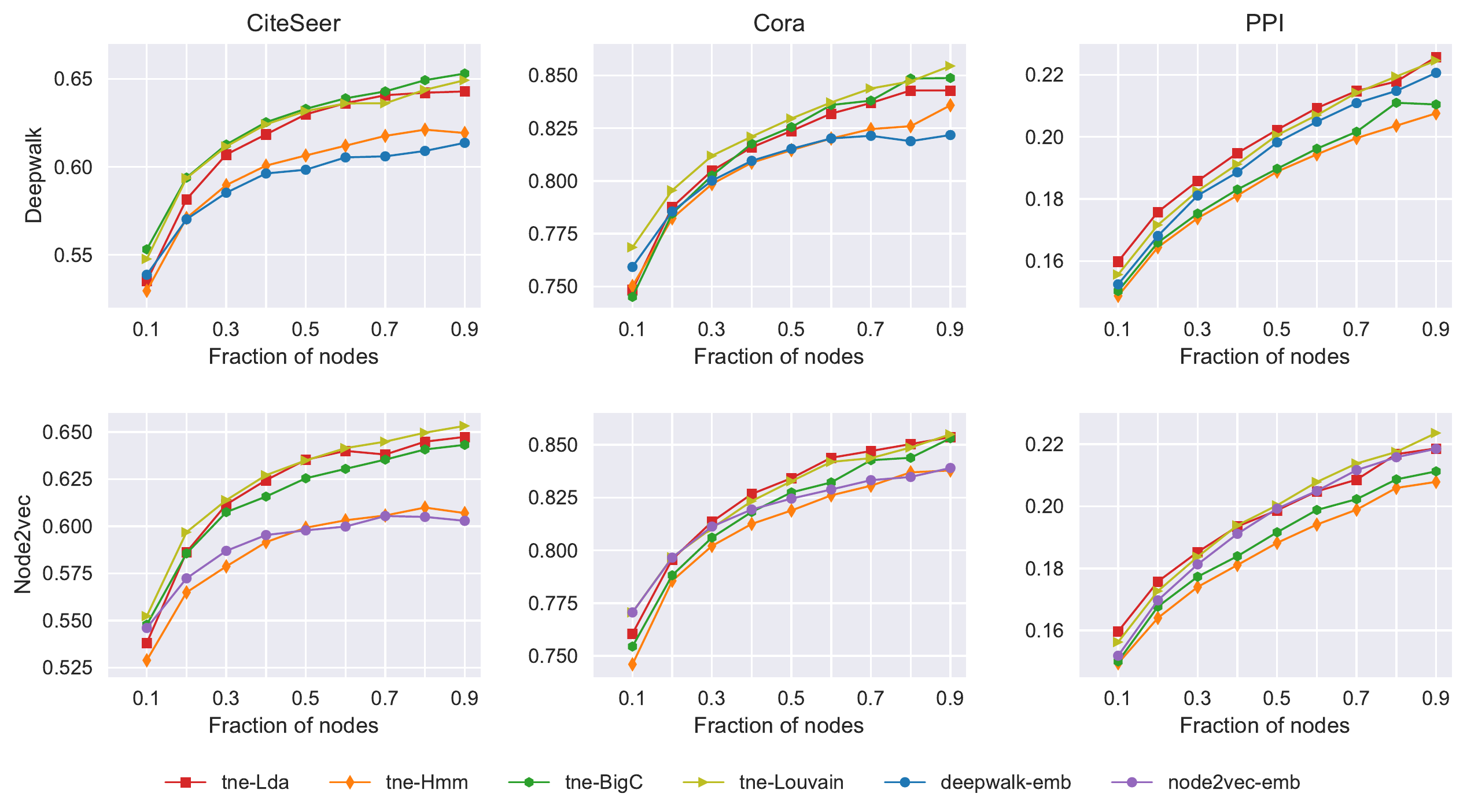}
	\caption{Performance evaluation of the proposed TNE framework against Deepwalk and Node2vec, over a varying fraction of training data. The $x$-axis indicates the ratio of the training dataset, and the $y$-axis shows the Micro-$F_1$ scores for different random walk strategies on three different networks. \label{fig: classification}}
\end{figure*}

In the multi-label node classification experiment, every node of the network is assigned to at least one label; the goal is  to predict the correct node labels by only observing certain fraction of the network.  We use the embedding vectors that we have learned in order to carry out node classification task. We randomly split the collection of feature vectors into training and tests sets, and apply an one-vs-rest logistic regression classifier with $L2$ regularization for optimization. In order to provide more reliable experimental results, we repeat the same procedure for $50$ times.  We use the following three datasets in our experiments.

\begin{itemize}
	\item \textit{CiteSeer} \cite{harp} is a citation network extracted from the CiteSeer library, where nodes represent  research papers and the edges indicate citations between publications.
	
	\item \textit{Protein-Protein Interaction (PPI)} is the subgraph of PPI network for Homo Saphiens and each label corresponds to a biological state \cite{node2vec}.
	
	\item \textit{Cora} \cite{cora} is a citation network consisting of machine learning publications divided into seven categories. Every paper in the corpus is cited or cites at least one other paper.
	
\end{itemize}

\noindent Table \ref{tbl:data-stats} provides the basic statistics of the above datasets.

\begin{table}
\begin{center}
\label{tab:dataset_statistics}
	\begin{tabular}{lccc}
		\toprule
		Name & Citeseer & Cora & PPI \\
		\midrule
		$\#$ Vertices & 3,312 & 2,708 & 3,890 \\
		$\#$ Edges & 4,660 & 5,278 & 38,739 \\
		$\#$ Clusters & 6 & 7 & 50 \\
		\bottomrule
	\end{tabular}
    \end{center}
    \caption{Statistics of the networks used in the multi-label node classification experiment. \label{tbl:data-stats}}
\end{table}

\begin{table}[t]
\begin{center}
\begin{tabular}{rccc}
Name & Citeseer & Cora & PPI \\ \hline
\textit{deepwalk-emb} & 0.554 & 0.808 & 0.174 \\
\textit{tne-Lda} & 0.590 & 0.816 & 0.179 \\
\textbf{Gain/Loss (\%)} & \textbf{6.58} & \textbf{1.04} & \textbf{2.83} \\
\textit{tne-Hmm} & 0.565 & 0.807 & 0.165 \\
\textbf{Gain/Loss (\%)} & \textbf{2.02} & \textbf{-0.03} & \textbf{-5.01} \\
\textit{tne-BigC} & 0.591 & 0.814 & 0.168 \\
\textbf{Gain/Loss (\%)} & \textbf{6.69} & \textbf{0.81} & \textbf{-3.14} \\
\textit{tne-Louvain} & 0.589 & 0.819 & 0.175 \\
\textbf{Gain/Loss (\%)} & \textbf{6.45} & \textbf{1.42} & \textbf{0.80} \\
\textit{} &  &  &  \\
Name & Citeseer & Cora & PPI \\ \hline
\textit{node2vec-emb} & 0.551 & 0.814 & 0.174 \\
\textit{tne-Lda} & 0.591 & 0.822 & 0.175 \\
\textbf{Gain/Loss (\%)} & \textbf{7.32} & \textbf{0.96} & \textbf{0.47} \\
\textit{tne-Hmm} & 0.556 & 0.807 & 0.164 \\
\textbf{Gain/Loss (\%)} & \textbf{0.84} & \textbf{-0.93} & \textbf{-5.68} \\
\textit{tne-BigC} & 0.586 & 0.817 & 0.169 \\
\textbf{Gain/Loss (\%)} & \textbf{6.31} & \textbf{0.28} & \textbf{-2.90} \\
\textit{tne-Louvain} & 0.593 & 0.823 & 0.173 \\
\textbf{Gain/Loss (\%)} & \textbf{7.58} & \textbf{1.10} & \textbf{-0.47}
\end{tabular}
\end{center}
\caption{Macro-$F_1$ scores for multi-label node classification, where 50\% of the nodes are used for training. The top table shows the performance of the various TNE models applied on walks extracted by Deepwalk, as well as the performance of the Deepwalk algorithm. Similarly, the bottom table gives the performance of TNE with respect to Node2vec. \label{tab:classification_50percent}}
\end{table}

\subsubsection*{Experiment results}
Figure \ref{fig: classification} depicts the Micro-$F_1$ scores for the variants of the TNE framework as well as for the baseline methods,  with respect to the number of nodes in the training set. In Table \ref{tab:classification_50percent}, the Macro-$F_1$ scores are shown for the case where the size of training and test sets are equal. As it can be seen,  \textit{tne-BigC} provides a gain of up to $6.69\%$  compared to the raw Deepwalk model (\textit{deeepwalk-emb}), and up to $6.31\%$ compared to Node2vec (\textit{node2vec-emb}) on the \textit{Citeseer} dataset. 

Although the general performance of the two feature learning methods Node2vec and Deepwalk are the same over the PPI network, \textit{tne-Lda} model increases the score up to $2.83\%$ while \textit{tne-Louvain} cannot show a great performance as much as it.

\subsection{The effect of the number of topics}
In this paragraph, we analyze the effect of the number of topics (or clusters)  in the performance of our framework. We perform experiments on the CiteSeer network and we examine the \textit{tne-Lda} and \textit{tne-Hmm} models on the collection of random walks generated by Deepwalk and Node2vec.  All the parameter settings are  the same as those described in Subsection \ref{subsec: paramsetting}, except the number of topics. Figure \ref{fig: number_of_topics} indicates that the increase in the number of topics makes positive contribution up to a certain value for \textit{tne-Lda} model. On the other hand, this is not valid for \textit{tne-Hmm}; it performs better for $K=120$ over  both random walk strategies. The chosen number of topics shows its importance for large training data sizes -- the scores get closer to each other when the training size decreases.

\begin{figure}
\centering
	\includegraphics[scale=0.6]{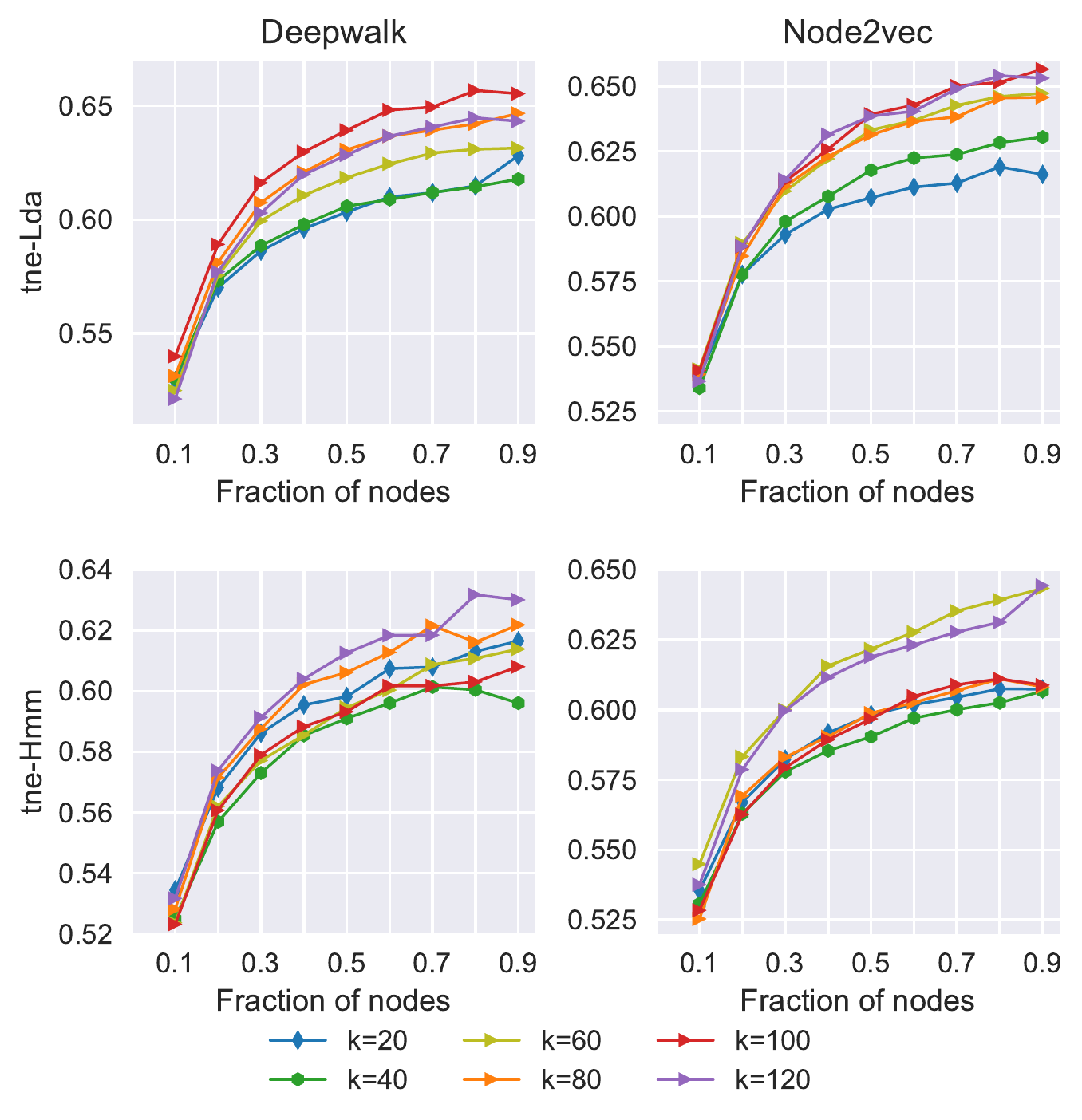}
	\caption{Micro-$F_1$ scores for various values of the number of topics for the CiteSeer network. \label{fig: number_of_topics}}
\end{figure}

\subsection{The effect of the concatenation strategy}
In Section \ref{sec:model}, we have described how to combine the node and topic feature vectors, in order to construct topical node embeddings. Here, we perform several experiments to observe the behavior of those strategies over varying training data sizes.  Figure \ref{fig:concatenation_method} depicts the Micro-$F_1$ scores on the CiteSeer network. As it can be seen, the $MAX$ and $WMean$ strategies highly outperform  the third one across all cases, and their scores are highly close to each other.

\begin{figure}[t]
\centering
	\includegraphics[scale=0.6]{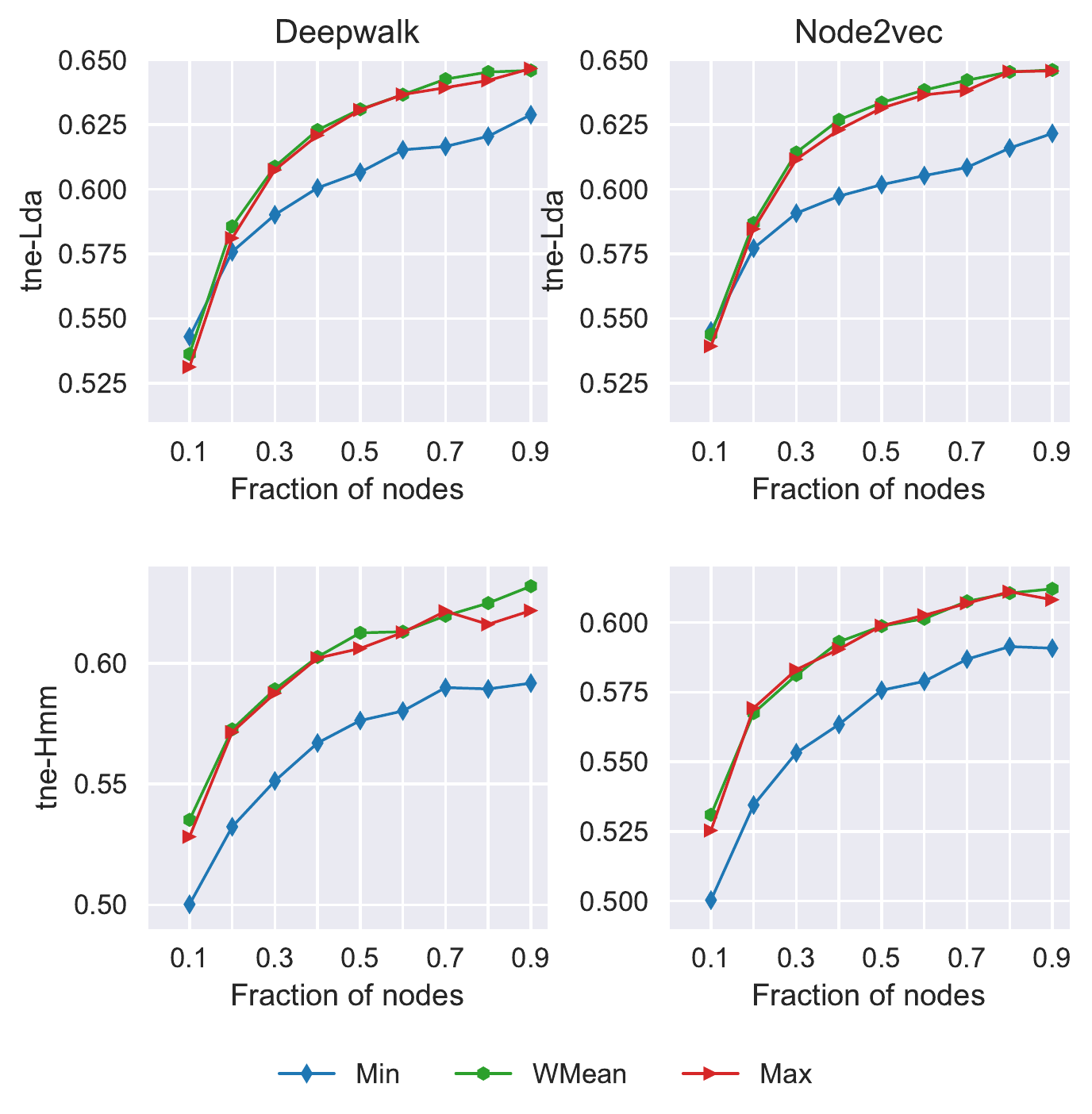}
	\caption{The effect of different embedding concatenation methods for the CiteSeer network. \label{fig:concatenation_method}}
\end{figure}

\begin{table*}[t]
\centering
\begin{tabular}{llcccccccc}
 &  & \multicolumn{2}{c}{(a)} & \multicolumn{2}{c}{(b)} & \multicolumn{2}{c}{(c)} & \multicolumn{2}{c}{(d)} \\
 & \multicolumn{1}{l|}{} & Deepwalk & \multicolumn{1}{c|}{Node2vec} & Deepwalk & \multicolumn{1}{c|}{Node2vec} & Deepwalk & \multicolumn{1}{c|}{Node2vec} & Deepwalk & Node2vec \\ \cline{2-10} 
\multirow{5}{*}{\rotatebox{90}{Gnutella}} & \multicolumn{1}{l|}{} & 0.5952 & \multicolumn{1}{c|}{0.5944} & 0.7050 & \multicolumn{1}{c|}{0.7148} & 0.5825 & \multicolumn{1}{c|}{0.6194} & 0.5790 & 0.6171 \\
 & \multicolumn{1}{r|}{\textit{tne-Lda}} & 0.5920 & \multicolumn{1}{c|}{0.5991} & 0.7043 & \multicolumn{1}{c|}{0.7086} & 0.5852 & \multicolumn{1}{c|}{0.6224} & 0.5820 & 0.6208 \\
 & \multicolumn{1}{r|}{\textit{tne-Hmm}} & 0.5961 & \multicolumn{1}{c|}{0.5916} & \textbf{0.7125} & \multicolumn{1}{c|}{\textbf{0.7261}} & 0.5821 & \multicolumn{1}{c|}{0.6179} & 0.5732 & 0.6137 \\
 & \multicolumn{1}{r|}{\textit{tne-BigC}} & 0.5988 & \multicolumn{1}{c|}{\textbf{0.6017}} & 0.7047 & \multicolumn{1}{c|}{0.7227} & 0.5863 & \multicolumn{1}{c|}{\textbf{0.6272}} & 0.5804 & \textbf{0.6256} \\
 & \multicolumn{1}{r|}{\textit{tne-Louvain}} & \textbf{0.5998} & \multicolumn{1}{c|}{0.5945} & 0.6991 & \multicolumn{1}{c|}{0.7071} & \textbf{0.5873} & \multicolumn{1}{c|}{0.6188} & \textbf{0.5827} & 0.6158 \\ \cline{2-10} 
\multirow{5}{*}{\rotatebox{90}{Facebook}} & \multicolumn{1}{l|}{} & 0.9862 & \multicolumn{1}{c|}{0.9865} & 0.7537 & \multicolumn{1}{c|}{0.7505} & 0.9839 & \multicolumn{1}{c|}{0.9831} & 0.9840 & 0.9834 \\
 & \multicolumn{1}{l|}{\textit{tne-Lda}} & 0.9882 & \multicolumn{1}{c|}{0.9888} & 0.7772 & \multicolumn{1}{c|}{0.7749} & 0.9859 & \multicolumn{1}{c|}{0.9861} & 0.9861 & 0.9866 \\
 & \multicolumn{1}{l|}{\textit{tne-Hmm}} & \textbf{0.9884} & \multicolumn{1}{c|}{0.9884} & \textbf{0.7789} & \multicolumn{1}{c|}{\textbf{0.7784}} & 0.9864 & \multicolumn{1}{c|}{0.9860} & 0.9868 & 0.9862 \\
 & \multicolumn{1}{l|}{\textit{tne-BigC}} & 0.9882 & \multicolumn{1}{c|}{\textbf{0.9890}} & 0.7715 & \multicolumn{1}{c|}{0.7731} & \textbf{0.9869} & \multicolumn{1}{c|}{\textbf{0.9864}} & \textbf{0.9870} & \textbf{0.9867} \\
 & \multicolumn{1}{l|}{\textit{tne-Louvain}} & 0.9881 & \multicolumn{1}{c|}{0.9888} & 0.7597 & \multicolumn{1}{c|}{0.7615} & 0.9846 & \multicolumn{1}{c|}{0.9842} & 0.9847 & 0.9845 \\ \cline{2-10} 
\multirow{5}{*}{\rotatebox{90}{arXiv}} & \multicolumn{1}{l|}{} & 0.9262 & \multicolumn{1}{c|}{0.9314} & 0.7256 & \multicolumn{1}{c|}{0.7254} & 0.9249 & \multicolumn{1}{c|}{0.9304} & 0.9253 & 0.9312 \\
 & \multicolumn{1}{l|}{\textit{tne-Lda}} & \textbf{0.9328} & \multicolumn{1}{c|}{0.9346} & 0.7232 & \multicolumn{1}{c|}{0.7249} & \textbf{0.9335} & \multicolumn{1}{c|}{\textbf{0.9319}} & \textbf{0.9337} & 0.9323 \\
 & \multicolumn{1}{l|}{\textit{tne-Hmm}} & 0.9220 & \multicolumn{1}{c|}{0.9332} & 0.7223 & \multicolumn{1}{c|}{0.7290} & 0.9207 & \multicolumn{1}{c|}{0.9304} & 0.9212 & 0.9321 \\
 & \multicolumn{1}{l|}{\textit{tne-BigC}} & 0.9271 & \multicolumn{1}{c|}{0.9309} & \textbf{0.7273} & \multicolumn{1}{c|}{0.7311} & 0.9237 & \multicolumn{1}{c|}{0.9288} & 0.9228 & 0.9294 \\
 & \multicolumn{1}{l|}{\textit{tne-Louvain}} & 0.9302 & \multicolumn{1}{c|}{\textbf{0.9353}} & 0.7320 & \multicolumn{1}{c|}{\textbf{0.7375}} & 0.9274 & \multicolumn{1}{c|}{0.9340} & 0.9266 & \textbf{0.9342}
\end{tabular}
\caption{Area Under Curve (AUC) scores for the link prediction task with four different binary operators: (a) Hadamard, (b) Average, (c) Weighted-L1, and (d) Weighted-L2. The first row of each block corresponds to the performance of \textit{deepwalk-emb} and \textit{node2vec-emb}. \label{tab:link_prediction}}
\end{table*}

\subsection{Link Prediction}
In the link prediction task, we have a limited access to the edges of the network, and our goal is to predict the missing (unseen) edges between nodes. We divide the edge set  of a given network into two parts to form training and test sets, by randomly removing $50\%$ of  the edges (the network  remains connected during the process). The removed edges are later used as positive samples in the test set. The same number of node pairs that does not exist in the initial network is chosen to obtain negative samples for each training and test sets. The node embedding vectors are converted into edge features based on the binary operators listed in Table \ref{tab:binary_operators}. 

We perform all experiments  using the logistic regression classifier with $L2$ regularization on the following networks:

\begin{itemize}
	\item \textit{Gnutella} \cite{gnutella} is the peer-to-peer file sharing network collected in August 9, 2012. It consists of $8,114$ nodes and $26,013$ edges. 
	
	\item \textit{Facebook} \cite{facebook} is a social network containing $4,039$ nodes and $88,234$ edges. 
	
	\item \textit{arXiv GR-QC} \cite{grqc} is a co-authorship network consisting of $5,242$ nodes and $14,496$ edges. 
\end{itemize}


\subsubsection*{Experiment results} 
Table \ref{tab:link_prediction} presents the area under curve (AUC) scores for the link prediction task. As it can be seen, the proposed TNE framework  outperforms the baseline methods in all cases. For the \textit{Facebook} network, \textit{tne-BigC} gives the best results for all but the average operator -- which also corresponds to the best performing model across all different settings. 


\begin{table}
\begin{center}
	\begin{tabular}{rc}
	    \toprule
		Operator & Definition \\
		\midrule
		Hadamard & $v \circ u$ \\
		Average & $0.5\cdot(v+u)$ \\
		Weighted-L1 & $|v-u|_1$ \\
		Weighted-L2 & $|v-u|_2 $\\
		\bottomrule
	\end{tabular}
    \end{center}
\caption{Binary operators for learning edge feature vectors from node embeddings. \label{tab:binary_operators}}    
\end{table}

\section{Conclusions and Future Work} \label{sec:conclusions}
In this paper, we have proposed TNE, a latent model for representation learning on networks. TNE takes advantage of the  topics (or clusters) that a node belongs to -- leading to the concept of topical node embeddings. That way, TNE is capable of producing enriched latent node representations, compared to traditional random walk-based approaches,  leading to improved performance results in the tasks of node classification and link prediction.

\par Currently, TNE can be applied along with random walk-based approaches. An interesting future direction is how to extend the framework to include other NRL algorithms.  Moreover, motivated by the hierarchical community structure that many real networks follow, an interesting future direction would be to extend the framework towards learning hierarchical node embeddings.  Lastly, we plan to evaluate TNE in the task of community detection.

\bibliography{main}
\bibliographystyle{siam}

\section*{Appendix}

\subsubsection*{Proof of Lemma \ref{lemma: lda_markov_model_connection}}
Let $\boldsymbol{w}$ and $\boldsymbol{z}$ be the node and topic sequences that are generated by the Markov model defined in Section \ref{sec: prob_def} for given parameters $\boldsymbol{\pi}$, $\boldsymbol{a}$ and $\boldsymbol{b}$ with probability

\begin{equation}\label{eq:vc_walk_prob}
\mathbb{P}(\boldsymbol{w}, \boldsymbol{z}|\boldsymbol{a}, \boldsymbol{b}, \boldsymbol{\pi}) = \pi_{z_1}\left(\prod_{l=1}^{L-1}b_{z_{l},v_{l}}a_{z_{l},z_{l+1}}\right)b_{z_L,v_L}.
\end{equation}

\noindent The probability of generating the same pairs $(\boldsymbol{w}$ and $\boldsymbol{z})$ by the \textit{Lda} model is
\begin{equation}\label{eq: lda_walk_prob}
\mathbb{P}(\boldsymbol{w}, \boldsymbol{z}|\phi_k, \theta_{\boldsymbol{w}}) =  \prod_{l=1}^{L}\phi_{ z_l,v_l}\theta_{\boldsymbol{w},z_l},
\end{equation}
for a given $\phi_k \sim Dir(\beta)$ and $\theta_{\boldsymbol{w}} \sim Dir(\alpha)$, where $\alpha$ and $\beta$ are the hyper-parameters. If the emission, transition and initial state probabilities of the Markov chain are chosen as follows: $\boldsymbol{b}_{k} = \phi_{k}$, $\boldsymbol{a}_{(\cdot, k)} := \theta_{\boldsymbol{w}, k}$ and $\boldsymbol{\pi} = \boldsymbol{\theta}_{\boldsymbol{w}}$, then Eq.  \eqref{eq:vc_walk_prob} can be re-written as

\begin{align*}\label{eq:vc_walk_prob}
\mathbb{P}(\boldsymbol{w}, \boldsymbol{z}|\boldsymbol{a}, \boldsymbol{b}, \boldsymbol{\pi}) &= \pi_{z_1}\left(\prod_{l=1}^{L-1}b_{z_{l},v_{l}}a_{z_{l},z_{l+1}}\right)b_{z_L,v_L} \\
&= \boldsymbol{\theta}_{\boldsymbol{w},z_1}\left(\prod_{l=1}^{L-1}\phi_{z_{l},v_{l}}\boldsymbol{\theta}_{\boldsymbol{w},z_{l+1}}\right)\phi_{z_L,v_L} \\
&= \prod_{l=1}^{L}\phi_{z_{l},v_{l}}\boldsymbol{\theta}_{\boldsymbol{w},z_l},
\end{align*}

which is equal to the probability given in Eq. \eqref{eq: lda_walk_prob}.

\end{document}